\documentclass[11pt]{article}

\usepackage[margin=1in]{geometry}
\usepackage[T1]{fontenc}
\usepackage[utf8]{inputenc}
\usepackage{lmodern}

\usepackage{microtype}
\usepackage{amsmath,amssymb,amsthm}
\usepackage{graphicx}
\usepackage{xcolor}
\usepackage{booktabs}
\usepackage{url}
\usepackage[hidelinks]{hyperref}

\newtheorem{theorem}{Theorem}
\newtheorem{corollary}[theorem]{Corollary}
\newtheorem{remark}{Remark}

\title{In-Context Learning Under Regime Change}

\author{
Carson Dudley$^{1,*}$ \and
Yutong Bi$^{2,*}$ \and
Xiaofeng Liu$^{3}$ \and
Samet Oymak$^{2}$
}

\date{}

\begin{document}
\maketitle

\begin{center}
{\small
$^1$ Department of Mathematics, University of Michigan\\
$^2$ Department of Electrical Engineering and Computer Science, University of Michigan\\
$^3$ Michigan Institute for AI and Data in Society, University of Michigan\\[4pt]
$^*$ Equal contribution\\[4pt]
{\small \texttt{\{cdud, yutongb, xiaofliu, oymak\}@umich.edu}}}
\end{center}

\maketitle

\begin{abstract}

Non-stationary sequences arise naturally in control, forecasting, and decision-making. The data-generating process shifts at unknown times, and models must detect the change, discard or downweight obsolete evidence, and adapt to new dynamics on the fly. Transformer-based foundation models increasingly rely on in-context learning for time series forecasting, tabular prediction, and continuous control. As these models are deployed in non-stationary environments, understanding their ability to detect and adapt to regime shifts is important. We formalize this as an in-context change-point detection problem and formally establish the existence of transformer models that solve this problem. Our construction demonstrates that model complexity, in layers and parameters, depends on the level of information available about the change-point location, from no knowledge to knowing exact timing. We validate our results with experiments on synthetic linear regression and linear dynamical systems, where trained transformers match the performance of optimal baselines across information levels. We also show that encoding and incorporating changepoint knowledge indeed improves the real-world performance of a pretrained foundation models on infectious disease forecasting and on financial volatility forecasting around Federal Open Market Committee (FOMC) announcements without retraining, demonstrating practical applicability to real-world regime changes.

\end{abstract}

\section{Introduction}


Non-stationary environments are the norm rather than the exception in real-world sequential estimation. In control systems, component failures or environmental changes cause abrupt shifts in plant dynamics \cite{willisky1976, basseville1993detection}. In the Markov jump linear system (MJLS) framework \cite{costa2005discrete}, such regime switches are modeled as transitions of a discrete mode variable, and estimation quality depends critically on what is known about the switching signal. In financial markets, earnings reports and federal policy announcements can abruptly shift volatility regimes \cite{hamilton1989new, ang2011regime}. In epidemiology, policy interventions alter disease transmission dynamics mid-outbreak \cite{dudley2025sparse}. A common theme is a change-point detection and adaptation problem: the data-generating process shifts at an unknown time, and any estimator must identify the change, discard obsolete evidence, and adapt to new dynamics on the fly.

Classical approaches to change-point detection are well-studied. CUSUM procedures \cite{page1954} and their extensions \cite{lai1995} provide sequential tests with known optimality properties. Bayesian online change-point detection \cite{adams2007} maintains a posterior over run lengths, enabling principled adaptation to regime shifts. These methods, however, typically require explicit specification of the model class and change-point prior, and are designed for a single detection task.

A parallel development in machine learning has demonstrated that transformers can solve estimation problems in-context, adapting to new tasks purely from prompt examples without any parameter updates \cite{vaswani2023attentionneed, brown2020, garg2022}. Theoretical work has shown that transformer forward passes can implement gradient descent \cite{vonoswald2023}, least-squares and ridge regression \cite{akyurek2023, li2023}, and Bayesian model averaging \cite{zhang2023icl_bma}, with recent analyses of training dynamics toward such solutions \cite{zhang2023trained}. Transformer-based foundation models now underpin deployed systems for time series forecasting \cite{timesfm, chronos}, tabular prediction \cite{tabpfn}, and other sequential tasks where non-stationarity is pervasive. Yet nearly all in-context learning theory assumes stationary prompts drawn from a single task. The most closely related work is \cite{gatingisweighting}, which studies heterogeneous prompts using gated linear attention but without temporal sequential regime change.

This stationarity assumption is increasingly at odds with practice. As foundation models are deployed in settings where regime shifts are common, understanding their ability to detect and adapt to abrupt changes becomes important---not only from a machine learning perspective, but from the classical estimation perspective of what information structure the estimator requires.

\textbf{Contributions.} We formalize in-context change-point detection as a framework for studying transformers' ability to adapt to non-stationary sequences. Our contributions are:

\begin{enumerate}
    \item \textbf{Problem formulation.} We define a family of piecewise-linear in-context learning problems in which the data-generating process switches at an unknown change point. We consider a hierarchy of information levels---from no knowledge of the change-point location to exact timing---and study how positional encoding can communicate this side information to the model.

    \item \textbf{Constructive theory.} We provide explicit transformer constructions that solve the in-context change-point detection and adaptation problem. Our constructions demonstrate that the required model complexity (depth and width) depends on the level of side information available about the change point, establishing a capability-complexity tradeoff for non-stationary in-context learning.

    \item \textbf{Synthetic validation.} We train GPT-2 style transformers on piecewise-linear regression and piecewise-linear dynamical system tasks with stochastic change points. Across both settings, trained transformers match the performance of theoretically optimal baselines, confirming that the models learn to perform implicit change-point detection and adaptation in-context.

    \item \textbf{Real-world validation.} We demonstrate that our positional encoding methods improve the performance of a pretrained time series foundation model on infectious disease forecasting and financial volatility forecasting around Federal Open Market Committee (FOMC) announcements, without any retraining. This confirms that our methods offer practical value for deployed foundation models.
\end{enumerate}

\textbf{Related work.}
Our work lies at the intersection of change-point detection and in-context learning theory. Classical methods such as Bayesian online change-point detection \cite{adams2007} provide optimal procedures under explicit models, but do not address adaptation within learned models or in-context computation. On the in-context learning side, prior work has shown that transformers implement implicit learning algorithms, including gradient-based updates and least-squares regression \cite{garg2022,vonoswald2023,akyurek2023}, with recent results establishing approximation and training dynamics toward such solutions \cite{li2023,zhang2023trained}. However, these works assume stationary prompts drawn from a single task. The most closely related work is \cite{gatingisweighting}, which studies heterogeneous prompts, but without temporal structure or sequential regime change. In contrast, we study non-stationary sequences with an unknown change point and give constructive transformer architectures for in-context detection and adaptation, with complexity depending on the level of change-point information.

\begin{figure*}[htbp]
    \centering
    \includegraphics[width=\textwidth]{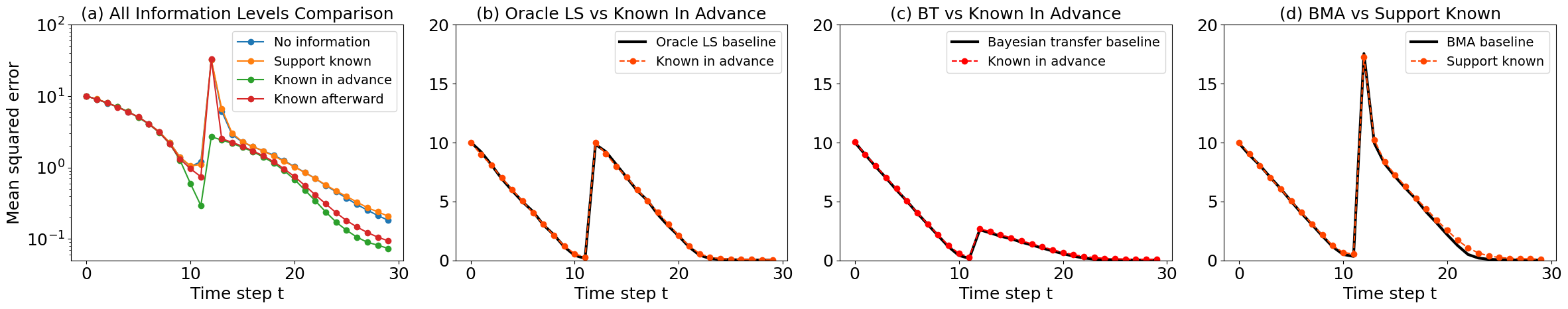}
    \caption{Piecewise-linear regression results. Mean squared error averaged over 5,000 test trajectories as a function of time step $t$, with the change point at $n_1^* = 12$ (vertical dashed line). \textbf{(a)}~Comparison of all four information levels on a log scale. Before the change point, the known-in-advance variant achieves the lowest error by isolating pre-change data. All variants spike at the transition; the no-information and support-known variants recover more slowly than the informed variants. \textbf{(b)}~The known-in-advance transformer closely matches the oracle least-squares baseline, which knows the true change point and fits only on data from the active regime. \textbf{(c)} Under a transfer task where $w_2 = -w_1 + \varepsilon$, the known-in-advance transformer matches a transfer ridge regression baseline that leverages pre-change data through the prior, demonstrating that the model learns to exploit the relationship between regimes rather than simply discarding old data. \textbf{(d)}~The support-known transformer closely tracks the Bayesian model averaging baseline, confirming that transformers with partial change-point information learn to perform approximate posterior averaging over candidate change-point locations in-context.}
    \label{fig:linreg}
\end{figure*}

\section{Transformers Can Implement Bayesian Change-Point Adaptation}
\label{sec:theory}
We show that a causal transformer can approximate the Bayesian model-averaged (BMA) predictor for the piecewise-linear change-point problem defined in~\eqref{eq:model}, and characterize how model complexity depends on available side information. We present the construction and proof sketches here; complete proofs appear in the extended version \cite{arxiv_version}.

\subsection{Problem Setup}

We observe $N$ input-label pairs $\{(x_i, y_i)\}_{i=1}^N$ generated by two linear regression tasks separated by an unknown change point $n_1^*$:
\begin{equation}
y_i = \begin{cases} \langle w_1, x_i \rangle + \varepsilon_i, & i \leq n_1^*, \\ \langle w_2, x_i \rangle + \varepsilon_i, & i > n_1^*. \end{cases}
\label{eq:model}
\end{equation}
We assume bounded inputs: $\|x_i\| \leq B_x$ and $|y_i| \leq B_y$ for known constants. The noise is Gaussian, $\varepsilon_i \sim \mathcal{N}(0, \sigma^2)$, and we place a Gaussian prior $w_1, w_2 \sim \mathcal{N}(0, \lambda^{-1} I_d)$ on the regression weights. The change point satisfies $n_1^* \in \mathcal{K}_t \subseteq \{1, \dots, t{-}1\}$, where $\mathcal{K}_t$ is the set of admissible change-point locations at time $t$, with $|\mathcal{K}_t| \leq R$. We assign a uniform prior $\pi_k = 1/|\mathcal{K}_t|$ over candidates.

\subsection{Main Result}

BMA is the optimal prediction strategy under squared error loss when the true model lies within the hypothesis class \cite{min1993bayesian, hoeting1999bma}. Recent work has shown that transformers implicitly implement BMA for stationary in-context learning \cite{zhang2023icl_bma}; our construction extends this to the non-stationary setting with an unknown change point. The BMA predictor maintains a posterior over change-point hypotheses and combines their predictions:
\begin{equation}
\hat{y}_t^{\mathrm{BMA}} = \sum_{k \in \mathcal{K}_t} \alpha_k(t) \, m_k(t),
\label{eq:bma}
\end{equation}
where $m_k(t) = x_t^\top \mu_k$ is the posterior predictive mean under hypothesis $k$ and $\alpha_k(t) \propto \pi_k \, p(Z_{<t} \mid k)$ is the posterior change-point weight. Under the Gaussian prior and noise model, both quantities depend on the sufficient statistics $A_k(t) = \sum_{i=k+1}^{t-1} x_i x_i^\top$, $b_k(t) = \sum_{i=k+1}^{t-1} x_i y_i$, and $c_k(t) = \sum_{i=k+1}^{t-1} y_i^2$ for the post-change segment under hypothesis~$k$.

Our construction shows that a causal transformer can compute these quantities and approximate~\eqref{eq:bma}:
\begin{enumerate}
\item \emph{Cumulative statistics (Layer~1):} A causal attention head can be configured to produce uniform weights over the causal prefix, computing running averages of $[\mathrm{vec}(x_i x_i^\top);\, x_i y_i;\, y_i^2]$. A subsequent MLP, given access to the position index via positional encoding, approximates multiplication by $t$ to recover cumulative prefix sums $P_j^{xx}, P_j^{xy}, P_j^{yy}$.
\item \emph{Segment isolation (Layer~2):} At the prediction position $t$, multi-head attention retrieves prefix sums at each candidate split $k \in \mathcal{K}_t$, and an MLP forms segment statistics via subtraction: $A_k(t) = P_{t-1}^{xx} - P_k^{xx}$, and similarly for $b_k$ and $c_k$.
\item \emph{Posterior computation and prediction (Layers~3--4):} MLPs approximate the posterior parameters $\mu_k, \Sigma_k$, the log-marginal likelihoods $\log p(Z_{<t} \mid k)$, and the posterior weights $\alpha_k(t)$, then form the weighted prediction~\eqref{eq:bma}.
\end{enumerate}
All computation from Layer~2 onward occurs at the prediction token, so causality is preserved without backward information flow.

\begin{theorem}[Approximation of Change-Point BMA]
\label{thm:main}
Consider the piecewise-linear model~\eqref{eq:model} with bounded inputs and the Gaussian prior and noise model specified above. For every $\epsilon > 0$, there exists a causal transformer $T_\epsilon$ such that
\begin{equation}
\bigl| T_\epsilon(Z_{<t}, x_t) - \hat{y}_t^{\mathrm{BMA}} \bigr| \leq \epsilon
\label{eq:approx}
\end{equation}
for all $t = 1, \dots, N$ and all input sequences satisfying the boundedness assumptions. The number of layers is constant for fixed $\epsilon$, and the number of attention heads and hidden dimension scale with $|\mathcal{K}_t|$ and $d$.
\end{theorem}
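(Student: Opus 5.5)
We follow the four-stage construction sketched above. The argument rests on a compactness principle: on the bounded input set $\{\|x_i\|\le B_x,\ |y_i|\le B_y\}$ with $t\le N$, every intermediate quantity lies in a known compact set. Each nonlinear map we need is continuous on that set, so it can be approximated uniformly by a ReLU MLP (universal approximation). The overall error then follows by composing these approximations and propagating errors through Lipschitz constants. We fix $\epsilon$, work backwards to allocate error budgets $\epsilon_1,\dots,\epsilon_4$ to the layers, and choose MLP widths accordingly. The depth stays constant: four blocks, each possibly containing a multi-layer MLP whose depth depends only on $\epsilon$.

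\textbf{Exact stages.} Layer~1 embeds the token $z_i=[x_i;y_i;\mathrm{vec}(x_ix_i^\top);x_iy_i;y_i^2;\text{pos}(i)]$. The quadratic features are obtained by an MLP that approximates products on a compact set. One attention head with zero query/key matrices gives uniform causal weights $1/i$, so position $i$ outputs the running mean. An MLP then multiplies by $i$ to recover $P_i^{xx},P_i^{xy},P_i^{yy}$. Since $i\le N$ and the means are bounded, this product is again a continuous map on a compact set; for a one-hot or scalar positional code it can even be exact on the finitely many values of $i$.

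Layer~2 uses one head per candidate $k\in\mathcal{K}_t$. We choose positional encodings so that the query at $t$ and the key at $j$ give a score that is sharply maximized at $j=k$. For example, take $\text{pos}(j)=(\cos\theta j,\sin\theta j)$ and scale the inner product by a large temperature $\beta$. The softmax then places mass at least $1-\delta$ on $j=k$, and the retrieval error is $O(\delta\cdot N B_x^2)$. Another head retrieves $P_{t-1}$ in the same way. A linear map forms $A_k=P_{t-1}^{xx}-P_k^{xx}$, together with $b_k$ and $c_k$. This step is where the head count scales with $R\ge|\mathcal{K}_t|$ and the width scales with $R d^2$.

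\textbf{Posterior stage (main obstacle).} Layers~3--4 compute
\[
\Sigma_k=(\lambda I+\sigma^{-2}A_k)^{-1},\qquad \mu_k=\sigma^{-2}\Sigma_k b_k,
\]
the log marginal likelihood $\ell_k$, which is a closed-form function of $\log\det$ terms, $c_k$, and $b_k^\top\Sigma_k b_k$, together with the pre-change segment's analogous term, and finally
\[
\hat y=\sum_k \mathrm{softmax}(\ell)_k\, x_t^\top\mu_k .
\]
The crucial point is that $\lambda I+\sigma^{-2}A_k\succeq\lambda I$. Matrix inversion and $\log\det$ are therefore smooth, with Lipschitz constants bounded by powers of $\lambda^{-1}$, on the compact set $\{0\preceq A_k\preceq NB_x^2 I\}$. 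Consequently errors $\epsilon_2$ in $A_k$ and $b_k$ produce only $O(\epsilon_2)$ errors in $\mu_k$ and $\ell_k$, with constants depending on $\lambda,\sigma,N,B_x,B_y,d$.

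Softmax is $1$-Lipschitz in $\ell_\infty$ to $\ell_1$, and $|m_k|\le B_x\|\mu_k\|$ is bounded. The final error is therefore at most $C(\epsilon_1+\epsilon_2+\epsilon_3)+\epsilon_4$, and choosing each budget as $\epsilon/(4C)$ gives the bound~\eqref{eq:approx}. The main difficulty lies in these error-propagation constants. I would control them first via the $\lambda I$ lower bound, and only then invoke universal approximation for the per-$k$ map $(A_k,b_k,c_k,x_t)\mapsto(\ell_k,m_k)$. This map is applied in parallel across the $|\mathcal{K}_t|$ candidates with shared weights, so the width scales with $|\mathcal{K}_t|$ and $d$ as claimed.
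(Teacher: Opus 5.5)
Your proposal is correct and follows essentially the same four-layer construction as the paper. It uses uniform causal attention for running means and an MLP rescaling by the position index to get prefix sums, then one retrieval head per candidate (plus one for $P_{t-1}$) followed by subtraction, and finally MLP approximation of the ridge means, log-evidences, softmax and weighted sum, with errors propagated through Lipschitz bounds that rest on $\lambda I+\sigma^{-2}A_k\succeq\lambda I$. Your two deviations are, if anything, more careful than the paper: unit-circle positional codes make the query--key score actually peak at $j=k$ (the paper's scalar code $\mathrm{PE}(j)=j$ gives score $Ckj$, which peaks at the largest $j$), and keeping the pre-change evidence in $\ell_k$ is right because that term depends on $k$ and does not cancel in the normalization.
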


\begin{proof}[Proof sketch]
The proof proceeds in three parts: showing that the transformer can causally assemble sufficient statistics, that these statistics determine the posterior predictive means $m_k(t)$ and weights $\alpha_k(t)$ via continuous maps, and that the resulting BMA predictor can be approximated.
 
\emph{Part 1: Causal assembly of sufficient statistics.} At each context position $j$, the token embedding includes $\mathrm{vec}(x_j x_j^\top)$, $x_j y_j$, and $y_j^2$. A single attention head with zero query and key projections produces uniform weights $\alpha_{ti} = 1/t$, yielding the running average $\bar{S}_t = \frac{1}{t}\sum_{i=1}^t S_i^{\mathrm{raw}}$ where $S_i^{\mathrm{raw}} = [\mathrm{vec}(x_i x_i^\top);\, x_i y_i;\, y_i^2]$. A subsequent MLP, given access to the position index $t$ through the positional encoding, approximates the product $t \cdot \bar{S}_t$ to recover cumulative prefix sums $P_j^{xx} = \sum_{i=1}^j x_i x_i^\top$, $P_j^{xy} = \sum_{i=1}^j x_i y_i$, $P_j^{yy} = \sum_{i=1}^j y_i^2$. This is the first role of PE in the construction: without access to the position index, the MLP cannot convert running averages into cumulative sums.

At the prediction position $t$, the transformer must retrieve these prefix sums at each candidate change-point index $k \in \mathcal{K}_t$. The candidate set $\mathcal{K}_t$ is determined by the side information communicated through positional encoding. When the PE encodes only the support $[L, U]$ of the change point, the transformer configures $|\mathcal{K}_t| = U - L + 1$ attention heads, each with query-key projections over the positional features designed to produce high attention weight at the corresponding candidate index $k$. When no side information is provided, $\mathcal{K}_t = \{1, \dots, t{-}1\}$ and $O(t)$ heads are required in principle. When the change point is known exactly, $|\mathcal{K}_t| = 1$ and a single head suffices. This is the second and primary role of PE: it determines the size of the candidate set and thereby controls the number of attention heads needed for retrieval. An MLP then computes segment statistics via subtraction: $A_k(t) = P_{t-1}^{xx} - P_k^{xx}$, $b_k(t) = P_{t-1}^{xy} - P_k^{xy}$, $c_k(t) = P_{t-1}^{yy} - P_k^{yy}$. After this stage, the prediction token's representation contains $(A_k, b_k, c_k)$ for every candidate $k \in \mathcal{K}_t$.
 
\emph{Part 2: From sufficient statistics to $m_k(t)$ and $\alpha_k(t)$.} After Part~1, the prediction token's residual stream contains the segment statistics $(A_k, b_k, c_k)$ for every candidate $k \in \mathcal{K}_t$, along with the query input $x_t$. All remaining computation is local to this single token and is performed by MLP layers.

First, the predictive means. Under the Gaussian prior and noise model, the posterior over the post-change weight given hypothesis $k$ is $w \mid Z_{<t}, k \sim \mathcal{N}(\mu_k, \Sigma_k)$ with $\Sigma_k = (\lambda I + \sigma^{-2} A_k)^{-1}$ and $\mu_k = \sigma^{-2} \Sigma_k \, b_k$. The predictive mean under hypothesis $k$ is $m_k(t) = x_t^\top \mu_k$. Computing $\mu_k$ requires inverting $\lambda I + \sigma^{-2} A_k$, which is continuous on the bounded domain since $\lambda > 0$ ensures uniform positive definiteness. An MLP can therefore approximate the map $(A_k, b_k, x_t) \mapsto m_k(t)$ for each candidate.

Second, the posterior weights. The weight of hypothesis $k$ is $\alpha_k(t) \propto \pi_k \, p(Z_{<t} \mid k)$, where the log-marginal likelihood under the Gaussian model is:
\begin{equation} 
\log p(Z_{<t} \mid k) = -\tfrac{n_k}{2}\log(2\pi\sigma^2) - \tfrac{1}{2}\log\det(\lambda^{-1} A_k + I) \\
- \tfrac{1}{2\sigma^2}\bigl(c_k - \sigma^{-2} b_k^\top \Sigma_k \, b_k\bigr)
\label{eq:logml}
\end{equation}

up to pre-change terms that cancel in the normalization. Here $n_k = |\{i : k < i < t\}|$ is the post-change sample size under hypothesis $k$. Every term in~\eqref{eq:logml}---including the log-determinant and the quadratic form $b_k^\top \Sigma_k \, b_k$---is a continuous function of the sufficient statistics $(A_k, b_k, c_k)$ on the bounded domain. Note that $c_k$ is essential: without it, one can compute the predictive means $m_k$ but not the marginal likelihoods needed to weight them.

Finally, the MLP computes the posterior weights $\alpha_k(t) = \mathrm{softmax}_k(\log p(Z_{<t} \mid k))$ and forms the weighted prediction $\hat{y}_t^{\mathrm{BMA}} = \sum_k \alpha_k(t) \, m_k(t)$.
 
\emph{Part 3: Approximation.} The entire computation from sufficient statistics to final prediction is a composition of continuous maps on a compact domain: $(A_k, b_k, x_t) \mapsto m_k(t)$, then $(A_k, b_k, c_k) \mapsto \log p(Z_{<t} \mid k)$, then softmax normalization to obtain $\alpha_k(t)$, and finally the weighted sum $\hat{y}_t^{\mathrm{BMA}} = \sum_k \alpha_k(t)\, m_k(t)$. Since all $|\mathcal{K}_t|$ candidates reside in the residual stream of a single token, this composition is executed entirely at the prediction position. By universal approximation results for multilayer networks on compact domains, the MLP blocks can approximate this map to within $\epsilon$.
\end{proof}

\begin{remark}[Detection as a consequence]
\label{rem:detection}
Theorem~\ref{thm:main} is an approximation guarantee for BMA, not a change-point recovery result. However, when the task separation $\Delta = \|w_1 - w_2\|$ is large relative to noise, the BMA posterior concentrates on the true change point, so that~\eqref{eq:bma} reduces to prediction using only post-change data. Under the Gaussian model, incorrect hypotheses are exponentially downweighted by their excess log-likelihood.
\end{remark}

\begin{corollary}[Known Change Point]
\label{cor:known}
If $n_1^*$ is known to the transformer via positional encoding, the candidate set reduces to $\mathcal{K}_t = \{n_1^*\}$. A causal transformer with $O(1)$ layers (for fixed $\epsilon$), hidden dimension $O(d^2)$, and $O(1)$ attention heads achieves
\[
\bigl| T_\epsilon(Z_{<t}, x_t) - x_t^\top \mu_{n_1^*} \bigr| \leq \epsilon, \quad t > n_1^*,
\]
where $\mu_{n_1^*}$ is the Bayesian posterior mean using only post-change data. The construction simplifies in two ways. First, a single attention head with positional masking suffices to accumulate post-change statistics, eliminating the $O(|\mathcal{K}_t|)$-head retrieval layer and the MLP layers devoted to log-marginal likelihood evaluation and posterior weighting. Second, the prediction target itself improves: instead of approximating BMA, which hedges across hypotheses, the transformer directly approximates the oracle posterior mean under the true segmentation. 
\end{corollary}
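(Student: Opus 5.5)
The plan is to obtain Corollary~\ref{cor:known} as the special case $\mathcal{K}_t=\{n_1^*\}$ of Theorem~\ref{thm:main}, and then to collapse the construction. When $|\mathcal{K}_t|=1$ the posterior weight is identically $\alpha_{n_1^*}(t)=1$, so the BMA predictor~\eqref{eq:bma} reduces to $m_{n_1^*}(t)=x_t^\top\mu_{n_1^*}$. Applying Theorem~\ref{thm:main} directly therefore already gives the displayed bound for every $t>n_1^*$. The remaining work is to show that the size bounds are $O(1)$ layers, $O(1)$ heads, and hidden dimension $O(d^2)$. For this I will give a direct, shorter construction rather than reading the sizes off the theorem.

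\textbf{Step 1: accumulate post-change statistics.} The token embedding at position $i$ carries $S_i^{\mathrm{raw}}=[\mathrm{vec}(x_ix_i^\top);x_iy_i]$, which has $d^2+d$ coordinates. It also carries a positional feature encoding $i$ and the known $n_1^*$; this is the side information supplied by the PE. I will use one causal head. Its query at position $t$ is a large constant $\beta$. Its key at position $i$ is $\beta'\cdot\mathbb{1}[n_1^*<i<t]$, which in practice must come from positional features, e.g.\ a PE coordinate $\phi_i=\mathbb{1}[i>n_1^*]$, with the $i=t$ term excluded by treating the query token as having $y_t=0$, so that its $x_ty_t$ contribution vanishes. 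With this choice the softmax gives weights within $e^{-\Omega(\beta\beta')}$ of uniform over the $n_t=t-1-n_1^*$ post-change tokens. The head output is then approximately the segment average $\frac1{n_t}(A_{n_1^*},b_{n_1^*})$. The query token's $\mathrm{vec}(x_tx_t^\top)$ term is a nuisance: I will either zero it out by gating with the indicator through the value, or subtract a $1/n_t$-weighted copy in the MLP. Because the entries are bounded by $B_x^2$ and $B_xB_y$, the leakage error is uniformly controlled by the attention temperature.

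\textbf{Step 2: rescale and predict.} A first MLP multiplies the averages by $n_t$, which it reads from the PE, to recover $(A_{n_1^*},b_{n_1^*})$; these have magnitude at most $NB_x^2$ and $NB_xB_y$. A second MLP block then approximates the map
\[
(A,b,x_t)\mapsto x_t^\top(\lambda\sigma^2 I+A)^{-1}b .
\]
This map is continuous on the compact set $\{A\succeq0,\ \|A\|\le NB_x^2,\ \|b\|\le NB_xB_y,\ \|x_t\|\le B_x\}$, since $\lambda>0$ keeps the smallest eigenvalue of $\lambda\sigma^2I+A$ at least $\lambda\sigma^2$, so the inverse is Lipschitz there. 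Universal approximation then yields an $\epsilon/2$-accurate MLP. No $c_k$, log-determinant, or softmax over hypotheses is needed, which is exactly the simplification the corollary claims. The residual width required is $O(d^2)$, coming from $\mathrm{vec}(A)$, and the model uses one head and a constant number of layers.

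\textbf{Main obstacle: error propagation.} The step I expect to be most delicate is combining the errors. The attention leakage and the rescaling error are amplified by $n_t\le N$ and then passed through the prediction map. I will first bound the Lipschitz constant of $(A,b)\mapsto x_t^\top(\lambda\sigma^2I+A)^{-1}b$ on the domain by roughly $B_x(\lambda\sigma^2)^{-1}(1+NB_xB_y(\lambda\sigma^2)^{-1})$. I will then choose the attention temperature and the Step~2 MLP accuracy so that the upstream error, after this amplification, is below $\epsilon/2$. Since $N$ is fixed, this only affects weight magnitudes and MLP width, not the number of layers or heads, and the stated complexity holds for fixed $\epsilon$.
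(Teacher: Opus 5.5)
Your proposal is correct and follows essentially the same route as the paper: $\alpha_{n_1^*}(t)=1$ collapses BMA to $x_t^\top\mu_{n_1^*}$; one positionally masked head averages the post-change statistics; an MLP rescales by the segment length read from the PE; and a final MLP approximates $(A,b,x_t)\mapsto x_t^\top(\lambda\sigma^2 I+A)^{-1}b$, so $c_k$, the log-determinant and the softmax over hypotheses are no longer needed. Your explicit error propagation and your handling of the query token's own contribution are more careful than the paper, whose averaging formula silently drops position $t$; note only that with key $\phi_i=\mathbf{1}[i>n_1^*]$ position $t$ is still attended, so the head averages over $n_t+1$ tokens, and you should rescale by $n_t+1$ and then subtract $x_tx_t^\top$ exactly (this also covers $t=n_1^*+1$).
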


This corollary corresponds to the oracle baseline in our experiments and highlights the complexity--information tradeoff: knowing the change point reduces $|\mathcal{K}_t|$ to $1$, reducing the required number of attention heads from $O(R)$ to $O(1)$.

\section{Synthetic Experiments}
\label{sec:experiments}

We evaluate our theoretical framework in two synthetic settings: piecewise-linear regression and piecewise-linear dynamical systems. Both settings are designed to isolate the core challenge of in-context regime adaptation in a controlled setting where we can calculate the ideal behavior. In addition, many recent works have shown that when transformers are trained on large volumes of synthetic tasks, they can generalize well to real-world tasks without retraining \cite{tabpfn, sgnns2025, timepfn, sgnntheory}. A sequence is generated by one linear process up to an unknown change point and by a different linear process afterward. The transformer must detect the transition from context and quickly adapt its predictions to the new regime.

Across both settings, we train GPT-2 style causal transformers under stochastic change-point locations and compare them to ideal baselines matched to the information available to the model. Our goal is not only to test whether transformers benefit from additional change-point information, but also whether they approach the best possible in-context strategy under each information regime.

\subsection{Encoding Change-Point Information}
\label{sec:pe}

We use positional encoding (PE) to provide side information about the change point. This lets us vary the amount of information available to the model without changing the architecture or training objective. We consider three PE schemes---no PE, sinusoidal PE, and linear PE---under four information levels:
\begin{enumerate}
    \item \textbf{No information:} the model receives no explicit signal about the change point.
    \item \textbf{Support known:} the model is told only the interval in which the change point lies.
    \item \textbf{Known in advance:} the exact change point is encoded before the transition occurs.
    \item \textbf{Known afterward:} the model is informed only after the transition has occurred.
\end{enumerate}
This hierarchy allows us to study how side information changes the difficulty of in-context adaptation.

\subsection{Piecewise-Linear Regression}
\label{sec:linreg}

\paragraph{Setup}
We first consider a piecewise-linear regression task. Each prompt consists of $N=30$ data-label pairs generated by one linear predictor before the change point and a different linear predictor afterward. During training, the change point is drawn uniformly from $\{10,\dots,20\}$. At test time, we fix the change point to $n_1=12$ so that performance cannot be explained by a simple bias toward the midpoint of the training distribution. Note that because the change point is always drawn from $\{10, \dots, 20\}$ during training, the no-information variant can implicitly learn this support from the training distribution, making it functionally equivalent to the support-known variant in this experiment.

\paragraph{Baselines}
We compare transformer performance to three ideal baselines.
\begin{itemize}
    \item \textbf{Oracle ridge regression:} for settings where the model knows the change point, we compare against an oracle that knows the regime boundaries and performs ridge regression using only samples from the active regime.
    \item \textbf{Transfer ridge regression:} for the correlated-task setting where $w_2 = -w_1 + \varepsilon \eta$ with $\eta \sim \mathcal{N}(0, I)$, simply discarding pre-change data is suboptimal because the task~1 samples carry information about $w_2$. This baseline first estimates $w_1$ via ridge regression on the pre-change data, obtaining the posterior $w_1 \mid X_1, Y_1 \sim \mathcal{N}(\hat{w}_1, \Sigma_1)$. Propagating through the known relationship gives an informative prior $w_2 \sim \mathcal{N}(-\hat{w}_1, \Sigma_1 + \varepsilon^2 I)$ on the post-change weights, which is then updated via ridge regression as task~2 samples arrive.
    \item \textbf{Bayesian model averaging (BMA):} for settings in which the model is not given the change point, we compare against a Bayesian baseline that maintains a posterior over all candidate change-point locations and combines the corresponding ridge regression predictors according to their posterior weights.
\end{itemize}
These baselines represent the ideal prediction strategies with and without change-point information, respectively.

\paragraph{Evaluation}
We report mean squared error (MSE) at each time step, averaged over 5,000 test trajectories. This lets us directly measure detection lag, post-change adaptation, and the benefit of side information.

\paragraph{Results}
Figure~\ref{fig:linreg} shows the per-step MSE for the piecewise-linear regression task, averaged over 5,000 test trajectories with the change point fixed at $n_1^* = 12$. Panel~(a) compares all four information levels on a log scale. Before the change point, the known-in-advance variant achieves lower error than the other three variants. This is because the uninformed variants must begin hedging before the change point to avoid catastrophic misprediction if a regime shift occurs, sacrificing pre-change accuracy for robustness. The known-in-advance variant, which knows the transition will not occur until $t = 12$, can commit fully to the pre-change task and make the best possible prediction at every step. At the transition, all variants spike, but with different magnitudes. The known-in-advance variant returns to roughly baseline error levels: it knows the old regime has ended and does not mispredict with stale dynamics, but has no observations from the new regime yet and must predict from an uninformed prior. The other three variants, still relying on the old dynamics, produce catastrophic errors that exceed baseline levels by almost an order of magnitude. After one step of this catastrophic misprediction, the known-afterward variant receives the change-point signal and immediately matches the known-in-advance error, since both now condition on the correct segmentation. The no-information and support-known variants, lacking exact change-point timing, require several additional post-change observations to implicitly detect the regime shift and downweight obsolete data. The similar recovery profiles of the no-information and support-known variants are expected: since the training distribution always draws change points from $\{10, \dots, 20\}$, the no-information model implicitly learns this support, rendering the two variants functionally equivalent.

Panels~(b)--(d) compare individual transformer variants to their corresponding ideal baselines. The known-in-advance transformer closely matches the oracle least-squares baseline (panel~b), which knows the true segmentation and fits only on data from the active regime. This validates Corollary~\ref{cor:known} in the regression setting: when the change point is provided, the transformer learns the oracle strategy.

Panel~(c) tests a more challenging transfer variant in which the post-change weights are anticorrelated with the pre-change weights: $w_2 = -w_1 + \varepsilon$ for small $\varepsilon$. Here the ideal strategy is not to discard pre-change data, but to leverage it through a Bayesian transfer prior that reflects the known relationship between regimes \cite{gatingisweighting}. The known-in-advance transformer closely matches this Bayesian transfer baseline, demonstrating that the model learns to exploit inter-regime structure rather than simply discarding pre-change observations. This goes beyond our theoretical construction, which assumes independent priors on $w_1$ and $w_2$, and suggests that trained transformers can discover richer adaptation strategies than the ones we prove exist.

Panel~(d) shows that the support-known transformer closely tracks the BMA baseline, which maintains a posterior over all candidate change-point locations within the known support and averages predictions accordingly. This confirms that when partial information narrows the candidate set $\mathcal{K}_t$ but does not resolve the change point exactly, transformers learn to perform the corresponding hypothesis averaging in-context, consistent with Theorem~\ref{thm:main}.

\subsection{Piecewise-Linear Dynamical Systems}
\label{sec:lds}

\paragraph{Setup}
We next consider a piecewise-linear dynamical system (LDS) setting. Here the prompt is a trajectory of $N=40$ states, generated by one stable linear dynamical system before the change point and another afterward. The two system matrices are drawn independently for each trajectory. During training, the change point is sampled uniformly from $\{15,\dots,25\}$, and at test time we fix it to $n_1=17$.

This setting is more challenging than the regression setting because the observations are temporally dependent. The model must not only infer that a regime change has occurred, but also rapidly identify the new system dynamics from a small number of post-change transitions.

\paragraph{Baselines}
We again compare against two ideal baselines, now adapted to online system identification. The LDS prediction task $x_{t+1} = A x_t + \varepsilon_t$ can be viewed as $d$ parallel linear regressions, one per row of $A$, with $x_t$ as the shared covariate. We place a Gaussian prior on each row, $a_r \sim \mathcal{N}(0, \tau^2 I_d)$, matching the distribution used to generate the system matrices. The posterior mean of $A$ given a segment of transitions is then the Bayesian ridge regression estimate with regularization $\lambda = 1/\tau^2$, the matrix analogue of the posterior mean $\mu_k = \sigma^{-2} \Sigma_k b_k$ in the regression setting.

\begin{itemize}
    \item \textbf{Oracle segmented ridge regression:} for informed settings, we compare against a baseline that knows the true change point but not the system matrices. At each time $t$, it accumulates transitions from the correct regime only and predicts with the corresponding ridge estimate of $A$. This is the analogue of the oracle regression baseline.
    \item \textbf{Bayesian changepoint averaging:} for uninformed settings, we compare against a baseline that maintains a posterior over candidate change points $k \in \{15, \dots, 25\}$, initialized uniformly. Each hypothesis defines its own segmentation and ridge estimate of $A$; the mixture prediction averages these, weighted by sequentially updated posterior probabilities. Predictive uncertainty is incorporated so that hypotheses with few segment observations are penalized less for large errors, preventing premature elimination early in a segment.
\end{itemize}
Both baselines are Bayes-optimal under the known prior and noise model. Even the informed oracle must estimate the system matrices from data, and the ridge regularization is essential in the post-change period when the number of available transitions may be smaller than the state dimension.

\paragraph{Evaluation}
As in the regression experiments, we report per-step MSE averaged over 5,000 test trajectories. In this setting, we are especially interested in the post-change transient, since adaptation requires identifying a new dynamical system rather than simply switching to a new regression weight vector.

\begin{figure*}[htbp]
    \centering
    \includegraphics[width=0.95\textwidth]{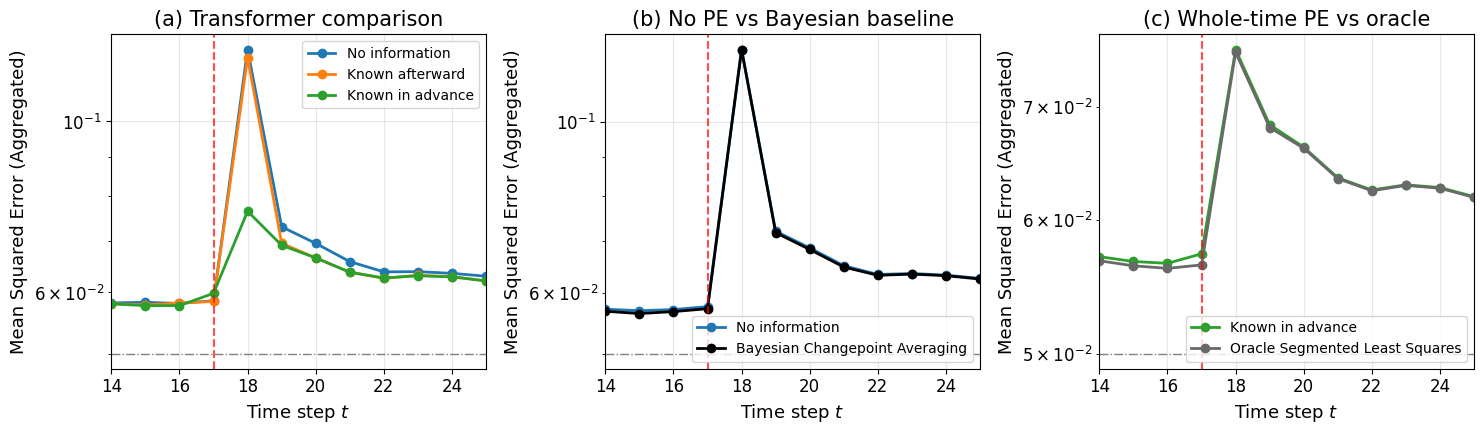}
    \caption{Piecewise-linear dynamical system results. Mean squared error (aggregated over 5,000 test trajectories) as a function of time step $t$, with the change point at $n_1^* = 17$ (dashed red line). \textbf{(a)}~Comparison of transformer variants: no positional encoding, positional encoding revealed after the change point, and positional encoding available for the whole sequence. All variants spike at the regime transition; the whole-time variant recovers fastest. \textbf{(b)}~The no-PE transformer closely tracks the Bayesian changepoint averaging baseline, confirming that the uninformed model learns to implement approximate posterior averaging over change-point hypotheses in-context. \textbf{(c)}~The whole-time PE transformer closely matches the oracle segmented least-squares baseline, which knows the true change point but must still estimate system dynamics from data. This validates the known-change-point construction (Corollary~\ref{cor:known}).}
    \label{fig:lds}
\end{figure*}

\paragraph{Results}
Figure~\ref{fig:lds} shows per-step MSE averaged over 5,000 test trajectories with $n_1^*=17$. All variants spike at the change point, but the whole-time PE variant recovers fastest: its error reflects only prior uncertainty over the new system, whereas the no-PE and known-after variants initially mispredict by extrapolating stale dynamics. The known-after variant matches whole-time PE as soon as it receives the change-point signal. The no-PE variant requires several additional steps to implicitly detect the shift. Panels (b) and (c) show that the no-PE transformer closely tracks Bayesian changepoint averaging, while the whole-time PE transformer matches the oracle segmented least-squares baseline. These results confirm that the complexity-information tradeoff from our theory carries over to temporally dependent settings, and that trained transformers learn to match the optimal baseline corresponding to their information level.

\section{Real-world experiments}

\subsection{Forecasting Infectious Diseases with Policy Changes}

Recent advances in infectious disease forecasting have introduced foundation models that aim to generalize across diseases, regions, and surveillance settings without retraining. Models such as Mantis \cite{mantis} learn from large-scale synthetic or historical data and can produce accurate forecasts across a wide range of infectious diseases. More general time series foundation models have also been applied in epidemiological settings with promising results \cite{timeseriesfmdisease}. However, a key limitation remains. When the underlying disease dynamics undergo a regime shift---for example, due to a change in government policy or intervention strategy---these models often fail to rapidly adapt, as they assume continuity in the data-generating process.

From a control perspective, policy changes act as external interventions that alter the system dynamics governing disease spread. After such a change, past data may no longer be predictive, and the model must rapidly adapt by reweighting or discarding obsolete information. In practice, we often have advance knowledge that a policy change will occur, even if its exact effect is unknown. This raises a key question: how should this change-point information be communicated to a foundation model to adapt its forecasts without retraining?

To study this, we use infectious disease data from Thailand spanning 1980--2021, a period with multiple government-driven regime shifts. This provides a realistic testbed for non-stationarity in epidemiological time series. We evaluate whether encoding change-point information enables a pretrained forecasting model to adjust its predictions at inference time, translating the principles from our synthetic setting to real-world, policy-driven dynamics.

\paragraph{Setup}
We construct a collection of disease-policy episodes from Thai infectious disease data, where each episode consists of a window around a known policy change.. In each evaluation run, we perform leave-one-sequence-out evaluation: one episode is designated as the target sequence, while the remaining episodes are used as context. Context sequences are not restricted to the same disease family, allowing the model to draw on regime-shift patterns across diseases.

Following the tabular forecasting formulation of TabPFN-TS \cite{tabpfnts}, we convert each episode into a tabular dataset with one row per time step. Each row includes the relative time index within the window, z-score normalized observation value, a binary direction indicator specifying whether the policy change is expected to produce an upward or downward shift, and a positional-encoding feature. We compare three variants of this final feature: no positional encoding, linear positional encoding based on distance to the change point, and sinusoidal positional encoding.

Each episode has length 30. For the target sequence, the change point is fixed at $t=12$. For context sequences, the change point is sampled uniformly from $\{10,\dots,20\}$, so that the model cannot rely on a fixed transition time and must instead use the positional signal when it is available. At evaluation time $t$, we provide the model with all rows from the context episodes together with the observed prefix of the target episode up to time $t$.

\paragraph{Evaluation}
We evaluate all time points $t$ in the target sequence. At each $t$, the model produces recursive forecasts for the next three observations, and we compute the mean absolute error over this horizon. We then average these errors across all target episodes. Our primary comparison is between the no-PE, linear-PE, and sinusoidal-PE settings. Improvements from the positional-encoding variants indicate that explicitly providing change-point information helps a pretrained foundation model adapt to policy-driven regime shifts without retraining.

\begin{figure}[htbp]
    \centering
    \includegraphics[width=0.49\textwidth]{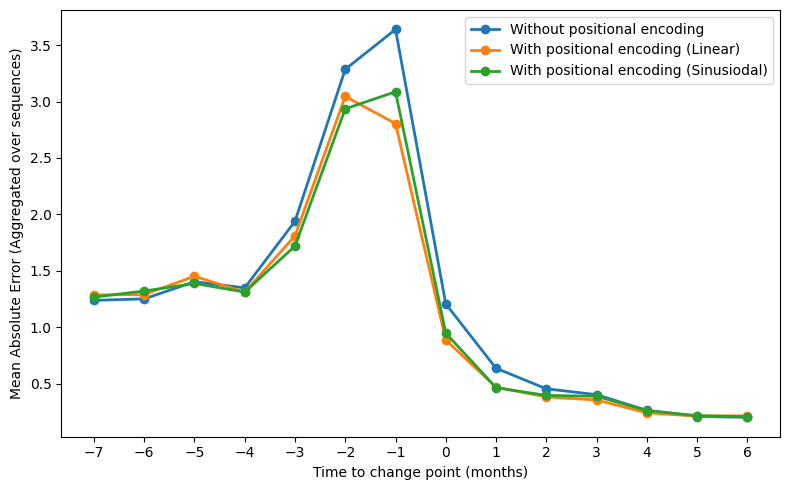}
    \caption{Mean absolute error for infectious disease forecasting as a function of forecast origin time relative to the policy change point. Each forecast is a recursive three-step-ahead prediction, so forecasts originating shortly before the change point span the regime transition. Both linear and sinusoidal positional encodings reduce MAE by approximately 25\% at the forecast origins most affected by the regime shift, compared to the no-PE baseline.}
    \label{fig:disease}
\end{figure}
\paragraph{Results}
Figure~\ref{fig:disease} shows the MAE as a function of forecast origin time relative to the change point. The largest performance gap between PE variants and the no-PE baseline occurs at origin times shortly before the change point, where the best PE variant reduces error by approximately 25\%. This timing is explained by the multi-step forecast horizon: because each prediction covers three steps ahead, forecasts originating just before the change point must predict across the regime transition. Without positional information, the model extrapolates pre-change dynamics into the new regime; with PE, the model can anticipate the upcoming shift and adjust its predictions accordingly. After the change point, all methods improve as post-change observations accumulate in the conditioning window, and the advantage of positional encoding gradually diminishes as the model can infer the new regime directly from recent data. Overall, these results demonstrate that encoding change-point information enables a pretrained foundation model to adapt to policy-driven regime shifts at inference time, with the largest gains precisely where the forecasting task spans a regime boundary.

\subsection{Forecasting Financial Series with Monetary Policy Events}

Scheduled monetary policy announcements can introduce abrupt changes in financial time series, making forecasting more difficult around event dates. In particular, Federal Open Market Committee (FOMC) announcements can influence both short-term interest rates and broader asset-market behavior \cite{kuttner2001,bernanke2005}. Because the timing of these events is known in advance, they provide a natural setting for testing whether explicit event-time information helps a pretrained model improve its forecasts.

\paragraph{Setup}
We study two daily financial series from FRED: the effective federal funds rate and the S\&P 500 index \cite{fred_dff,fred_sp500}. These are paired with FOMC statement dates from the Federal Reserve's historical records \cite{fomc_historical}. For each business day $t$, the prediction target is the next-day value, yielding a direct one-step-ahead forecasting task.

Each observation is represented in tabular form using the current series value together with standard time-series covariates: lags 1--10, the first difference, a 5-day rolling mean, and a 5-day rolling standard deviation. On top of these baseline features, we compare three event-aware variants. The no-PE baseline uses only the time-series covariates. The linear-PE variant adds the signed business-day distance to the nearest FOMC event, so that negative values indicate days before the event and positive values indicate days after it. The sinusoidal-PE variant instead adds sine and cosine encodings of the signed distance.

Evaluation is performed with walk-forward forecasting. Each training window contains roughly three years of business-day observations and is capped at that size, while testing is carried out on successive 21-business-day blocks. For each split, a TabPFN regressor is trained on the current window and evaluated on the next block.

\paragraph{Evaluation}
We report overall MAE across all walk-forward predictions. Our primary comparison is between the no-PE, linear-PE, and sinusoidal-PE variants for each financial series. Improvements from the positional-encoding models indicate that supplying explicit information about policy-event timing helps the pretrained model adapt to event-related changes in financial dynamics.

\paragraph{Results}
Figure~\ref{fig:fedfunds} shows the same overall ordering across the three variants. In both cases, linear positional encoding achieves the lowest MAE, sinusoidal positional encoding is slightly higher, and the no-PE baseline has the largest error.

\begin{figure}[htbp]
    \centering
    \includegraphics[width=0.45\textwidth]{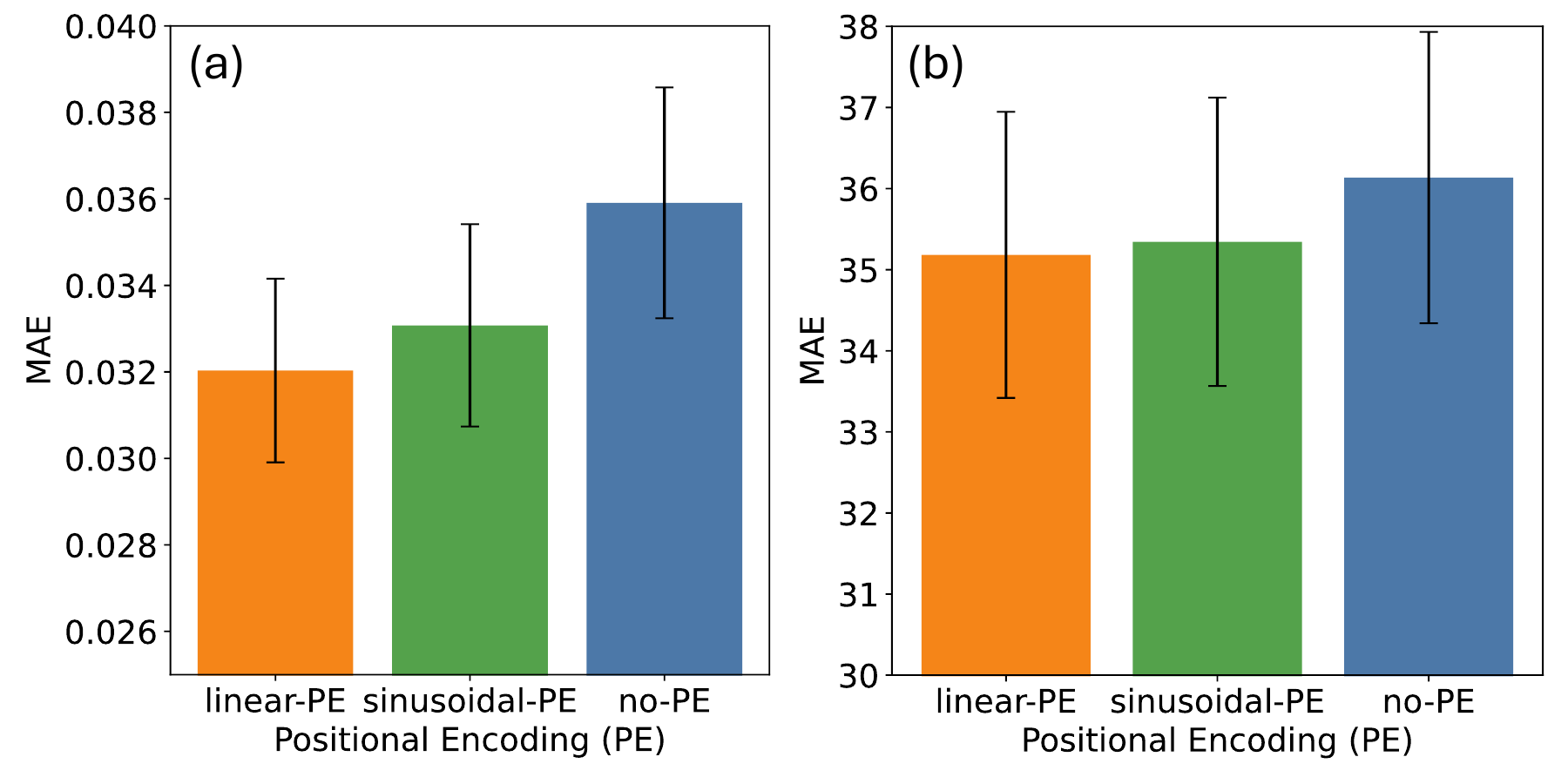}
\caption{Mean absolute error for federal funds rate (a) and S\&P 500 index (b) forecasting and under three feature settings: no positional encoding, linear positional encoding based on signed business-day distance to the nearest Federal Open Market Committee (FOMC) statement day, and sinusoidal positional encoding. Error bars show the standard error.}

    \label{fig:fedfunds}
\end{figure}

\section{Discussion}
\label{sec:discussion}

We have shown that transformers can approximate the Bayesian model-averaged predictor for in-context change-point detection, with model complexity governed by the amount of side information available about the change point. Our synthetic experiments confirm that trained transformers match the performance of optimal baselines---BMA when uninformed and oracle least squares when informed---across both regression and dynamical system settings. Our real-world experiments demonstrate that positional encoding of change-point information improves the forecasts of a pretrained foundation model on infectious disease and financial data without any retraining.

\paragraph{Limitations} Our theoretical analysis assumes a single change point and a linear task. Extension to multiple change points would require increasing the required depth or width. Extension to nonlinear tasks would require different sufficient statistics and is an open question. However, our real-world experiments suggest these limitations may be more theoretical than practical: both real-world tasks involve multiple regime shifts and nonlinear dynamics, yet the same strategy yields consistent improvements. This suggests the core mechanism transfers beyond the setting our theory covers. Our construction demonstrates existence of suitable transformer weights but does not address whether gradient descent finds them during training; the synthetic experiments suggest it does, but a formal analysis remains future work.

\paragraph{Broader implications} Foundation models for time series and tabular prediction are increasingly deployed in settings where regime shifts are common, yet their architectures and training procedures are designed for stationary prompts. Our results suggest a simple, practical intervention: encoding change-point information through positional features at inference time, without retraining. More broadly, the framework demonstrates a natural connection between classical sequential detection and modern in-context learning. The BMA predictor our transformer approximates is closely related to Bayesian online change-point detection \cite{adams2007}, and the known-change-point construction mirrors adaptive control strategies that discard pre-fault data after a detected switch \cite{basseville1993detection}. As foundation models are deployed for forecasting, control, and decision-making in non-stationary environments, the extensive toolkit developed by the control and detection communities becomes directly relevant, not as competing methods, but as the algorithmic targets that these models must learn to implement in-context.

\bibliographystyle{unsrt}
\bibliography{references}

@misc{brown2020,
      title={Language Models are Few-Shot Learners}, 
      author={Tom B. Brown and others},
      year={2020},
      eprint={2005.14165},
      archivePrefix={arXiv},
      primaryClass={cs.CL},
      url={https://arxiv.org/abs/2005.14165}, 
}

@misc{garg2022,
      title={What Can Transformers Learn In-Context? A Case Study of Simple Function Classes}, 
      author={Shivam Garg and others},
      year={2023},
      eprint={2208.01066},
      archivePrefix={arXiv},
      primaryClass={cs.CL},
      url={https://arxiv.org/abs/2208.01066}, 
}

@misc{vonoswald2023,
      title={Transformers learn in-context by gradient descent}, 
      author={Johannes von Oswald and others},
      year={2023},
      eprint={2212.07677},
      archivePrefix={arXiv},
      primaryClass={cs.LG},
      url={https://arxiv.org/abs/2212.07677}, 
}

@misc{akyurek2023,
      title={What learning algorithm is in-context learning? Investigations with linear models}, 
      author={Ekin Akyürek and others},
      year={2023},
      eprint={2211.15661},
      archivePrefix={arXiv},
      primaryClass={cs.LG},
      url={https://arxiv.org/abs/2211.15661}, 
}

@article{hamilton1989new,
  title={A New Approach to the Economic Analysis of Nonstationary Time Series and the Business Cycle},
  author={Hamilton, James D.},
  journal={Econometrica},
  volume={57},
  number={2},
  pages={357--384},
  year={1989},
}

@techreport{ang2011regime,
  title={Regime Changes and Financial Markets},
  author={Ang, Andrew and Timmermann, Allan},
  year={2011},
  month={June},
  institution={National Bureau of Economic Research},
  type={Working Paper},
}

@article{page1954,
  title={Continuous inspection schemes},
  author={Page, E. S.},
  journal={Biometrika},
  volume={41},
  year={1954},
  publisher={JSTOR}
}

@article{lai1995,
  title={Sequential changepoint detection in quality control and dynamical systems},
  author={Lai, Tze Leung},
  journal={Journal of the Royal Statistical Society: Series B},
  volume={57},
  number={4},
  year={1995},
}

@misc{adams2007,
      title={Bayesian Online Changepoint Detection}, 
      author={Ryan Prescott Adams and David J. C. MacKay},
      year={2007},
      eprint={0710.3742},
      archivePrefix={arXiv},
      primaryClass={stat.ML},
      url={https://arxiv.org/abs/0710.3742}, 
}

@misc{arxiv_version,
      title={Transformers as Adaptive Estimators: In-Context Learning Under Regime Change}, 
      author={Carson Dudley and Yutong Bi and Xiaofeng Liu and Samet Oymak},
      year={2026},
      note={Technical Report}
}

@misc{chronos,
    title = {Chronos: Transformer-Based Language Models for Time-Series Forecasting},
    author = {Abdul Fatir Ansari and others},
    year = {2024},
    eprint = {2403.07815},
    archivePrefix = {arXiv},
    primaryClass = {cs.LG}
}

@article{tabpfn,
  title={Accurate predictions on small data with a tabular foundation model},
  author={Hollmann, Noah and M{\"u}ller, Samuel and others},
  journal={Nature},
  volume={637},
  pages={319--326},
  year={2025},
}

@misc{timesfm,
      title={A decoder-only foundation model for time-series forecasting}, 
      author={Abhimanyu Das and Weihao Kong and Rajat Sen and Yichen Zhou},
      year={2024},
      eprint={2310.10688},
      archivePrefix={arXiv},
      primaryClass={cs.CL},
      url={https://arxiv.org/abs/2310.10688}, 
}

@article{sgnntheory,
  title={Learning From Simulators: A Theory of Simulation-Grounded Learning},
  author={Carson Dudley and Marisa Eisenberg},
  journal={arXiv preprint arXiv:2509.18990},
  year={2025},
  url={https://arxiv.org/abs/2509.18990}
}

@article{dudley2025sparse,
  title={From Sparse Data to Smart Decisions: Region-Specific Policy Evaluation via Simulation},
  author={Dudley, Carson and others},
  journal={medRxiv},
  year={2025},
  doi={10.64898/2025.12.05.25341712},
  url={https://doi.org/10.64898/2025.12.05.25341712}
}

@misc{gatingisweighting,
      title={Gating is Weighting: Understanding Gated Linear Attention through In-context Learning}, 
      author={Yingcong Li and others},
      year={2025},
      eprint={2504.04308},
      archivePrefix={arXiv},
      primaryClass={cs.LG},
      url={https://arxiv.org/abs/2504.04308}, 
}

@ARTICLE{willisky1976,
  author={Willsky, A. and Jones, H.},
  journal={IEEE Transactions on Automatic Control}, 
  title={A generalized likelihood ratio approach to the detection and estimation of jumps in linear systems}, 
  year={1976},
  volume={21},
  number={1},
  pages={108-112},
  doi={10.1109/TAC.1976.1101146}}

@book{basseville1993detection,
  title     = {Detection of Abrupt Changes: Theory and Application},
  author    = {Basseville, Mich\`{e}le and Nikiforov, Igor V.},
  year      = {1993},
}

@inproceedings{timepfn,
    title={Time{PFN}: Effective Multivariate Time Series Forecasting with Synthetic Data},
    author={Ege Onur Taga and M. Emrullah Ildiz and Samet Oymak},
    booktitle={Proceedings of the AAAI Conference on Artificial Intelligence},
    year={2025},
}

@article{timeseriesfmdisease,
  title={Foundation time series models for forecasting and policy evaluation in infectious disease epidemics},
  author={Kalahasti, Suprabhath and others},
  journal={medRxiv},
  year={2025},
  month={February},
  doi={10.1101/2025.02.24.25322795},
}

@misc{zhang2023trained,
      title={Trained Transformers Learn Linear Models In-Context}, 
      author={Ruiqi Zhang and Spencer Frei and Peter L. Bartlett},
      year={2023},
      eprint={2306.09927},
      archivePrefix={arXiv},
      primaryClass={stat.ML},
      url={https://arxiv.org/abs/2306.09927}, 
}

@article{hoeting1999bma,
  title={Bayesian model averaging: A tutorial},
  author={Hoeting, Jennifer A. and Madigan, David and Raftery, Adrian E. and Volinsky, Chris T.},
  journal={Statistical Science},
  volume={14},
  number={4},
  pages={382--417},
  year={1999},
  publisher={Institute of Mathematical Statistics}
}

@article{min1993bayesian,
  title={Bayesian and non-{B}ayesian methods for combining models and forecasts with applications to forecasting international growth rates},
  author={Min, Chung-ki and Zellner, Arnold},
  journal={Journal of Econometrics},
  volume={56},
  number={1-2},
  pages={89--118},
  year={1993},
  publisher={Elsevier}
}

@misc{zhang2023icl_bma,
      title={What and How does In-Context Learning Learn? Bayesian Model Averaging, Parameterization, and Generalization}, 
      author={Yufeng Zhang and Fengzhuo Zhang and Zhuoran Yang and Zhaoran Wang},
      year={2023},
      eprint={2305.19420},
      archivePrefix={arXiv},
      primaryClass={stat.ML},
      url={https://arxiv.org/abs/2305.19420}, 
}

@misc{vaswani2023attentionneed,
      title={Attention Is All You Need}, 
      author={Ashish Vaswani and Noam Shazeer and others},
      year={2023},
      eprint={1706.03762},
      archivePrefix={arXiv},
      primaryClass={cs.CL},
      url={https://arxiv.org/abs/1706.03762}, 
}

@misc{li2023,
      title={Transformers as Algorithms: Generalization and Stability in In-context Learning}, 
      author={Yingcong Li and M. Emrullah Ildiz and Dimitris Papailiopoulos and Samet Oymak},
      year={2023},
      eprint={2301.07067},
      archivePrefix={arXiv},
      primaryClass={cs.LG},
      url={https://arxiv.org/abs/2301.07067}, 
}

@article{sgnns2025,
  title={Simulation as Supervision: Mechanistic Pretraining for Scientific Discovery},
  author={Dudley, Carson and others},
  journal={arXiv preprint arXiv:2507.08977},
  year={2025},
  url={https://arxiv.org/abs/2507.08977}
}

@misc{tabpfnts,
      title={From Tables to Time: Extending TabPFN-v2 to Time Series Forecasting}, 
      author={Shi Bin Hoo and Samuel Müller and David Salinas and Frank Hutter},
      year={2026},
      eprint={2501.02945},
      archivePrefix={arXiv},
      primaryClass={cs.LG},
      url={https://arxiv.org/abs/2501.02945}, 
}

@article{mantis,
    title={Mantis: A Foundation Model for Mechanistic Disease Forecasting}, 
    author={Carson Dudley and others},
    year={2025},
    journal={arXiv preprint arXiv:2508.12260},
    url={https://arxiv.org/abs/2508.12260},
    volume={n/a}
}

@misc{fred_dff,
  author = {{Board of Governors of the Federal Reserve System (US)}},
  title = {Federal Funds Effective Rate [DFF]},
  year = {2026},
  howpublished = {\url{https://fred.stlouisfed.org/series/DFF}},
  note = {Retrieved from FRED, Federal Reserve Bank of St. Louis}
}

@book{costa2005discrete,
  author    = {O. L. V. Costa and M. D. Fragoso and R. P. Marques},
  title     = {Discrete-Time {M}arkov Jump Linear Systems},
  publisher = {Springer-Verlag},
  year      = {2005},
  series    = {Probability and Its Applications},
}

@misc{fred_sp500,
  author = {{S\&P Dow Jones Indices LLC}},
  title = {S\&P 500 [SP500]},
  year = {2026},
  howpublished = {\url{https://fred.stlouisfed.org/series/SP500}},
  note = {Retrieved from FRED, Federal Reserve Bank of St. Louis}
}

@misc{fomc_historical,
  author = {{Board of Governors of the Federal Reserve System}},
  title = {Federal Open Market Committee Historical Materials by Year},
  year = {2026},
  howpublished = {\url{https://www.federalreserve.gov/monetarypolicy/fomc_historical_year.htm}}
}

@article{kuttner2001,
  title={Monetary Policy Surprises and Interest Rates: Evidence from the Fed Funds Futures Market},
  author={Kuttner, Kenneth N.},
  journal={Journal of Monetary Economics},
  volume={47},
  number={3},
  pages={523--544},
  year={2001},
  doi={10.1016/S0304-3932(01)00055-1}
}

@article{bernanke2005,
  title={What Explains the Stock Market's Reaction to Federal Reserve Policy?},
  author={Bernanke, Ben S. and Kuttner, Kenneth N.},
  journal={The Journal of Finance},
  volume={60},
  number={3},
  pages={1221--1257},
  year={2005},
  doi={10.1111/j.1540-6261.2005.00760.x}
}

\appendix

\section{Full Proofs}
\label{app:proofs}

This appendix provides complete proofs of the results stated in Section~\ref{sec:theory}. We first discuss preliminaries and give the full proof of Theorem~\ref{thm:main} (Appendix~\ref{app:thm1}) and Corollary~\ref{cor:known} (Appendix~\ref{app:corollary}).

\subsection{Proof of Theorem~\ref{thm:main}}
\label{app:thm1}

\subsubsection{Notation and Assumptions}

We restate the model first. We observe $N$ input-label pairs generated by the piecewise-linear model
\begin{equation}
y_i = \begin{cases} \langle w_1, x_i \rangle + \varepsilon_i, & i \leq n_1^*, \\ \langle w_2, x_i \rangle + \varepsilon_i, & i > n_1^*, \end{cases}
\tag{\ref{eq:model}}
\end{equation}
with inputs $x_i \in \mathbb{R}^d$ satisfying $\|x_i\| \leq B_x$, labels satisfying $|y_i| \leq B_y$, noise $\varepsilon_i \sim \mathcal{N}(0, \sigma^2)$, and prior $w_1, w_2 \stackrel{\mathrm{iid}}{\sim} \mathcal{N}(0, \lambda^{-1} I_d)$. The change point $n_1^* \in \mathcal{K}_t \subseteq \{1, \dots, t{-}1\}$ has uniform prior $\pi_k = 1/|\mathcal{K}_t|$ and candidate set size $|\mathcal{K}_t| \leq R$.

\paragraph{Sufficient statistics.}
For each candidate change point $k \in \mathcal{K}_t$, define the post-change sufficient statistics
\begin{equation}
A_k(t) = \sum_{i=k+1}^{t-1} x_i x_i^\top, \quad
b_k(t) = \sum_{i=k+1}^{t-1} x_i y_i, \quad
c_k(t) = \sum_{i=k+1}^{t-1} y_i^2.
\label{eq:suffstats}
\end{equation}
We also define the cumulative prefix sums
\begin{equation}
P_j^{xx} = \sum_{i=1}^{j} x_i x_i^\top, \quad
P_j^{xy} = \sum_{i=1}^{j} x_i y_i, \quad
P_j^{yy} = \sum_{i=1}^{j} y_i^2,
\label{eq:prefixsums}
\end{equation}
so that the segment statistics can be recovered by subtraction:
\[
A_k(t) = P_{t-1}^{xx} - P_k^{xx}, \quad
b_k(t) = P_{t-1}^{xy} - P_k^{xy}, \quad
c_k(t) = P_{t-1}^{yy} - P_k^{yy}.
\]

\paragraph{Posterior under a fixed hypothesis.}
For a fixed candidate $k$, the posterior over the post-change parameter $w$ is Gaussian:
\[
w \mid Z_{<t}, k \sim \mathcal{N}(\mu_k, \Sigma_k),
\]
where
\begin{equation}
\Sigma_k = (\lambda I + \sigma^{-2} A_k)^{-1}, \quad
\mu_k = \sigma^{-2} \Sigma_k \, b_k.
\label{eq:posterior}
\end{equation}
The corresponding posterior predictive mean is
\begin{equation}
m_k(t) = x_t^\top \mu_k,
\label{eq:pred_mean_restate}
\end{equation}
aka the ridge regression predictor computed on the post-change segment.

\paragraph{Bayesian model averaging (BMA).}
The BMA predictor maintains a posterior over candidate change points and forms a weighted prediction:
\begin{equation}
\hat{y}_t^{\mathrm{BMA}} = \sum_{k \in \mathcal{K}_t} \alpha_k(t) \, m_k(t),
\label{eq:bma_restate}
\end{equation}
where the posterior weights satisfy
\[
\alpha_k(t) \propto \pi_k \, p(Z_{<t} \mid k), \quad \sum_k \alpha_k(t) = 1,
\]
and $p(Z_{<t} \mid k)$ denotes the marginal likelihood under hypothesis $k$.

\paragraph{Bounded domain.}
All quantities $(A_k, b_k, x_t)$ lie in a bounded set determined by $B_x, B_y,$ and $N$, which ensures that the maps defined below are continuous on a compact domain.

\subsubsection{Restatement of Theorem \ref{thm:main}}

We restate the main approximation result in the notation introduced above.

\begin{theorem}[Transformer approximation of change-point BMA]
\label{thm:main_restate}
Consider the piecewise-linear model~\eqref{eq:model} with bounded inputs and the Gaussian prior and noise model specified above. Let $\hat{y}_t^{\mathrm{BMA}}$ denote the Bayesian model-averaged predictor defined in~\eqref{eq:bma_restate}. 

For any $\epsilon > 0$, there exists a causal transformer $T_\epsilon$ such that, for all $t = 1, \dots, N$ and all valid input sequences,
\begin{equation}
\bigl| T_\epsilon(Z_{<t}, x_t) - \hat{y}_t^{\mathrm{BMA}} \bigr| \leq \epsilon.
\label{eq:approx_restate}
\end{equation}

Moreover, the construction has the following properties:
\begin{itemize}
\item The number of layers is $O(1)$ for fixed $\epsilon$ and problem parameters.
\item The number of attention heads scales as $O(|\mathcal{K}_t|)$.
\item The hidden dimension scales as $O(|\mathcal{K}_t| \cdot d^2)$.
\end{itemize}
\end{theorem}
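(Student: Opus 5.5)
The plan is to follow the three-part outline of the proof sketch, making each stage quantitative. Every approximation error will be tracked through a Lipschitz bound on the downstream map, so the final error budget splits as $\epsilon = \epsilon_1 + \epsilon_2 + \epsilon_3$.

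\textbf{Stage 1: causal assembly of the prefix sums.} The embedding at position $j$ will contain $S_j^{\mathrm{raw}} = [\mathrm{vec}(x_jx_j^\top); x_jy_j; y_j^2]$, the raw $(x_j,y_j)$, and positional features $(j, \phi(j))$. One head with zero query and key gives causal weights $1/j$, hence the running average $\bar S_j$. The MLP of Layer~1 then approximates the bilinear map $(j,\bar S_j)\mapsto j\bar S_j$ on the compact box $[1,N]\times[-B,B]^{d^2+d+1}$, with $B=\max(B_x^2,B_xB_y,B_y^2)$. For this I would use the standard product gadget (e.g.\ $uv = \tfrac14((u+v)^2-(u-v)^2)$ with a piecewise-linear approximation of squaring). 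This yields prefix sums $\tilde P_j$ with error $\delta_1$ uniformly in $j$.

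\textbf{Stage 2: retrieval of prefix sums at the candidates.} For each $k\in\mathcal K_t$ I will use one head whose query at position $t$ encodes $k$ (supplied by the PE side information) and whose key at position $j$ encodes $\phi(j)$. The score is $-\beta(j-k)^2$, realized as $\beta(2jk - j^2)$ plus a term constant in $j$. With temperature $\beta$ large, the softmax is within $Ne^{-\beta}$ of a point mass at $j=k$. The retrieved value is then within $\delta_1 + 2\|P\|_\infty N e^{-\beta}$ of $P_k$. A further head, or the residual at position $t-1$ accessed the same way, retrieves $P_{t-1}$, and a linear map forms $(\tilde A_k,\tilde b_k,\tilde c_k)$. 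Concatenating over candidates gives $O(|\mathcal K_t|)$ heads and residual width $O(|\mathcal K_t| d^2)$. When $|\mathcal K_t|<R$, I would pad with inactive slots plus a mask bit that is passed into the later softmax as a $-\infty$-like offset.

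\textbf{Stage 3: the map to the BMA prediction.} Define
\[
F\bigl((A_k,b_k,c_k,n_k,\mathrm{mask}_k)_k, x_t\bigr)=\sum_k \mathrm{softmax}_k(\ell_k)\,x_t^\top(\lambda\sigma^2 I + A_k)^{-1}b_k,
\]
where $\ell_k$ is the log-marginal likelihood in \eqref{eq:logml}. On the compact set $\{A\succeq 0, \|A\|\le NB_x^2\}\times\dots$, the inverse is smooth with $\|\Sigma_k\|\le\lambda^{-1}$, and $\log\det(I+\lambda^{-1}A)$ is smooth. Hence $F$ is continuous, indeed Lipschitz with constant $L$ depending on $\lambda,\sigma,N,B_x,B_y,R$. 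The softmax is $1$-Lipschitz in $\ell_\infty$, and the weights are bounded by $1$. By universal approximation on compact domains, MLPs approximate $F$ within $\epsilon_3$. Stacking them in Layers~3--4, with Layer~3 computing $(m_k,\ell_k)$ per slot and Layer~4 the softmax-weighted sum, keeps the depth constant. The total error is then $L(\delta_1+\delta_2)+\epsilon_3\le\epsilon$ once I pick $\delta_1,\beta,\epsilon_3$ accordingly. Note that $F$ must be evaluated on a slightly enlarged compact set to absorb the perturbed inputs; the enlargement must not cross $A\succeq -\lambda\sigma^2 I/2$, and that is still fine.

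\textbf{Main obstacle.} I expect the obstacle to be Stage 2: making hard retrieval of position $k$ uniform over all $t\le N$ and all candidates. The score gap between $j=k$ and $j=k\pm1$ is $\beta$ while the magnitudes of $j^2$ grow, so the $j^2$ term must sit in the key bias rather than cancel. The first thing I would try is to use PE features $(j,j^2,1)$ and set $\beta = O(\log(N\|P\|_\infty/\delta_2))$. A secondary subtlety is that $|\mathcal K_t|$ varies with $t$. I would handle it by building the network for $R$ slots and gating unused slots through the mask, which justifies the stated head and width scalings.
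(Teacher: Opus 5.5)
Your construction is the paper's four-layer construction essentially step for step. A head with zero query and key produces the causal averages $\bar S_j$, and an MLP multiplies by the position index to give prefix sums. Layer~2 uses one retrieval head per candidate plus one for $P_{t-1}$, followed by a linear subtraction. MLPs in Layers~3--4 then approximate the Lipschitz map to $\sum_k \alpha_k m_k$, with all errors pushed through Lipschitz constants. Where you deviate, you are more careful than the paper:
\begin{itemize}
\item \emph{Retrieval.} The paper's concrete linear-PE retrieval uses the score $C\,k_h\,j$. This is increasing in $j$, so it concentrates on the last causal position rather than on $k_h$. Your score $-\beta(j-k)^2$, built from a query $(2\beta k,-\beta)$ and a key $(j,j^2)$, has a gap of at least $\beta$ uniformly in $t$ and $k$, so it actually delivers the retrieval bound.
\item \emph{Domain.} The perturbed $\tilde A_k$ need not be positive semidefinite, yet the paper applies its approximation only on $\mathcal S$ with $A\succeq 0$. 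Your enlarged compact domain repairs this.
\item \emph{Variable candidate sets.} Your masked slots handle a $t$-dependent $|\mathcal K_t|$, which the paper passes over.
\end{itemize}

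There is one genuine error, which you inherit from the paper's \eqref{eq:logml}: the pre-change terms do not cancel in the normalization. Because $w_1$ and $w_2$ are a priori independent, $p(Z_{<t}\mid k)$ factors into two pieces: the evidence of the segment $1,\dots,k$ under $w_1$, times the evidence of $k+1,\dots,t-1$ under $w_2$. The first factor varies with $k$; for example, the ratio between hypotheses $k+1$ and $k$ contains the regime-1 predictive density of $y_{k+1}$ given $y_{1:k}$. Write $g(A,b,c)=-\tfrac12\log\det(I+\lambda^{-1}\sigma^{-2}A)-\tfrac{1}{2\sigma^2}\bigl(c-\sigma^{-2}b^\top(\lambda I+\sigma^{-2}A)^{-1}b\bigr)$, applied to a triple $(P^{xx},P^{xy},P^{yy})$. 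Then
\[
\log p(Z_{<t}\mid k) = -\tfrac{t-1}{2}\log(2\pi\sigma^2) + g(P_k) + g(P_{t-1}-P_k),
\]
and only the $\log(2\pi\sigma^2)$ pieces combine into a $k$-independent constant. (Note also the $\sigma^{-2}$ inside the log-determinant, which \eqref{eq:logml} drops.) Your $F$ feeds only $(A_k,b_k,c_k,n_k)$ into the softmax, so it approximates a different weighting, not $\hat y_t^{\mathrm{BMA}}$ as defined in \eqref{eq:bma_restate}. The fix costs nothing in your architecture. The retrieved $\tilde P_k$ are already in the residual stream at position $t$ after Layer~2, so you can take $\ell_k = g(\tilde P_k)+g(\tilde P_{t-1}-\tilde P_k)$. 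This is Lipschitz on the same enlarged compact set, and the rest of your error analysis goes through unchanged.
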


The proof proceeds by constructing a transformer that (i) computes prefix sufficient statistics, (ii) isolates segment statistics for each candidate $k$, and (iii) approximates the BMA predictor \eqref{eq:bma_restate} using MLP layers.

\subsubsection{Token Embedding}

The initial representation of token $j$ is
\begin{equation}
h_j^{(0)} = \bigl[ x_j;\; y_j;\; \mathrm{vec}(x_j x_j^\top);\; x_j y_j;\; y_j^2;\; \mathrm{PE}(j);\; \mathbf{0} \bigr] \in \mathbb{R}^{D},
\label{eq:embedding}
\end{equation}
where $\mathrm{PE}(j)$ is a positional encoding vector and $\mathbf{0}$ denotes unused coordinates reserved for intermediate computations in subsequent layers. The hidden dimension satisfies $D \geq d + 1 + d^2 + d + 1 + |\mathrm{PE}| + D_{\mathrm{work}}$, where $D_{\mathrm{work}}$ is the working space for intermediate computations.

We prove the theorem by explicit construction of a four-layer causal transformer that approximates the BMA predictor~\eqref{eq:bma} to within $\epsilon$ on all valid input sequences. The construction proceeds layer by layer. We track approximation errors introduced at each stage and show that, by choosing MLP widths sufficiently large, the total error can be made smaller than any prescribed $\epsilon > 0$.

\subsubsection*{Layer 1: Computing Cumulative Prefix Sums}

\paragraph{Step 1a: Uniform causal averaging.}
We configure a single attention head with $W_Q = 0$ and $W_K = 0$. Since all query-key inner products equal zero, the softmax attention weights reduce to the uniform distribution over the causal prefix:
\begin{equation}
\alpha_{ji} = \frac{1}{j}, \quad i = 1, \dots, j.
\label{eq:uniform_attn}
\end{equation}
Setting $W_V$ to be a projection onto the coordinates of $h_i^{(0)}$ containing $S_i^{\mathrm{raw}} := [\mathrm{vec}(x_i x_i^\top);\, x_i y_i;\, y_i^2]$ (which are included in the token embedding), the attention output at position $j$ is the running average
\begin{equation}
\bar{S}_j = \frac{1}{j} \sum_{i=1}^{j} S_i^{\mathrm{raw}}.
\label{eq:running_avg}
\end{equation}
After the residual connection, the representation at position $j$ contains both $\bar{S}_j$ and $\mathrm{PE}(j)$.

\paragraph{Step 1b: Recovering prefix sums via the MLP.}
The cumulative prefix sum at position $j$ is $P_j = j \cdot \bar{S}_j$. Since $\bar{S}_j$ and the positional encoding $\mathrm{PE}(j)$ are both available in the residual stream, the MLP must approximate the map
\begin{equation}
(\bar{S}_j, \mathrm{PE}(j)) \mapsto j \cdot \bar{S}_j = P_j.
\label{eq:mul_map}
\end{equation}
When $\mathrm{PE}(j) = j$ (linear positional encoding), this is multiplication of two bounded scalars (applied entrywise), which is a continuous function on the compact domain $\{(\bar{s}, p) : |\bar{s}| \leq B_x^2 + B_x B_y + B_y^2,\; p \in \{1, \dots, N\}\}$. When $\mathrm{PE}$ is sinusoidal, the map $j \mapsto \mathrm{PE}(j)$ is injective on $\{1, \dots, N\}$. Thus there exists a function $g$ defined on its image such that $g(\mathrm{PE}(j)) = j$. The map $(\bar{S}_j, \mathrm{PE}(j)) \mapsto g(\mathrm{PE}(j)) \cdot \bar{S}_j$ is continuous on a bounded domain, and hence can be uniformly approximated by an MLP.

By the universal approximation theorem, for any $\epsilon_1 > 0$, there exists an MLP such that
\[
\sup_{j = 1, \dots, N} \|\hat{P}_j - P_j\|_\infty \le \epsilon_1.
\]

The result is written to reserved coordinates in $h_j^{(1)}$ via the residual connection.

After Layer 1, the representation at each position $j$ contains $\hat{P}_j \approx P_j$ (the approximate prefix sums), the raw input $(x_j, y_j)$, the positional encoding $\mathrm{PE}(j)$, and the quadratic features $S_j^{\mathrm{raw}}$.

\subsubsection*{Layer 2: Segment Isolation via Retrieval}

At time step $t$, the transformer must retrieve the prefix sums $\hat{P}_k$ at each candidate change-point index $k \in \mathcal{K}_t$ and form segment statistics by subtraction:
\begin{equation}
\hat{A}_k(t) = \hat{P}_{t-1}^{xx} - \hat{P}_k^{xx}, \quad
\hat{b}_k(t) = \hat{P}_{t-1}^{xy} - \hat{P}_k^{xy}, \quad
\hat{c}_k(t) = \hat{P}_{t-1}^{yy} - \hat{P}_k^{yy}.
\label{eq:segment_retrieval}
\end{equation}

\paragraph{Step 2a: Multi-head retrieval of prefix sums.}
For each candidate $k \in \mathcal{K}_t$, we assign a dedicated attention head $h$ with index $h = h(k)$. The number of heads required is $|\mathcal{K}_t| + 1$ (one per candidate plus one head to retrieve $\hat{P}_{t-1}$).

We configure head $h(k)$ so that, at the prediction position $t$, it attends strongly to position $k$. The query-key projections are designed over the positional encoding coordinates:
\begin{equation}
W_Q^{(h)} h_t^{(1)} = \phi(\mathrm{PE}_{\mathrm{target}}(k)), \quad
W_K^{(h)} h_j^{(1)} = \phi(\mathrm{PE}(j)),
\label{eq:retrieval_qk}
\end{equation}
where $\phi$ is chosen so that the vectors $\{\phi(\mathrm{PE}(j))\}_{j=1}^N$ are distinct and can be separated by inner products, i.e., for each $k$,
\[
\langle \phi(\mathrm{PE}(k)), \phi(\mathrm{PE}(k)) \rangle > \langle \phi(\mathrm{PE}(k)), \phi(\mathrm{PE}(j)) \rangle \quad \text{for all } j \ne k.
\] The mechanism depends on how the candidate set $\mathcal{K}_t$ is communicated:

\begin{itemize}
\item \textbf{Support known} ($\mathcal{K}_t = \{L, \dots, U\}$): The PE encodes the boundaries $L, U$. We configure $R = U - L + 1$ heads, with head $h$ having a query projection that targets position $L + h - 1$. Because positional encodings are distinct across positions, the query-key inner product can be made to peak at the target position. More concretely, for linear PE where $\mathrm{PE}(j) = j$, we set $W_Q^{(h)} = [0; \dots; 1; \dots; 0]$ selecting the coordinate encoding target position $k_h$, and $W_K^{(h)}$ selecting the PE coordinate of $h_j^{(1)}$. Scaling these projections by a large constant $C$ makes the attention weights $\alpha_{t,j}^{(h)}$ approximate $\mathbf{1}[j = k_h]$ as $C \to \infty$, so the head retrieves $\hat{P}_{k_h}$ up to softmax leakage that is exponentially small in $C$.

\item \textbf{Known exactly} ($\mathcal{K}_t = \{n_1^*\}$): A single head suffices, with the query keyed to the PE encoding of $n_1^*$.

\item \textbf{No information} ($\mathcal{K}_t = \{1, \dots, t{-}1\}$): In principle, $O(t)$ heads are required. In practice, the training distribution restricts the effective support (e.g., $n_1^* \in \{N/2 - R, \dots, N/2 + R\}$), so $O(R)$ heads suffice for the model to implicitly learn the support.
\end{itemize}

The value projection $W_V^{(h)}$ selects the coordinates of $h_j^{(1)}$ containing $\hat{P}_j$. After the multi-head attention, the residual stream at position $t$ contains approximate copies of $\hat{P}_{k}$ for each $k \in \mathcal{K}_t$, as well as $\hat{P}_{t-1}$ (from either a dedicated retrieval head or the self-retrieval at position $t-1$ via the residual).

\paragraph{Retrieval accuracy.}
Let $\alpha_{t,j}^{(h)}$ denote the attention weights of head $h$ at position $t$. With query-key scaling $C$, the weight on the target position $k_h$ satisfies
\begin{equation}
\alpha_{t,k_h}^{(h)} = \frac{\exp(C)}{\exp(C) + \sum_{j \neq k_h} \exp(s_j)} \geq 1 - (t-1) e^{-C + S_{\max}},
\label{eq:retrieval_acc}
\end{equation}
where $s_j$ are the non-target scores and $S_{\max} = \max_{j \neq k_h} s_j$. Choosing $C$ large enough (dependent on $N$ and $\epsilon$ but not on the data), the retrieved value satisfies $\|\mathrm{Attn}^{(h)}(t) - \hat{P}_{k_h}\| \leq \epsilon_{\mathrm{ret}}$ for any prescribed $\epsilon_{\mathrm{ret}} > 0$.

\paragraph{Step 2b: Forming segment statistics.}
The Layer 2 MLP receives $\hat{P}_{t-1}$ and the retrieved values $\hat{P}_{k}$ for $k \in \mathcal{K}_t$, and computes segment statistics by subtraction~\eqref{eq:segment_retrieval}. Subtraction is a linear operation implemented exactly by the MLP (or even by a linear layer). The approximation error in the segment statistics inherits from the prefix sum errors and retrieval error:
\begin{equation}
\|\hat{A}_k(t) - A_k(t)\|_F \leq 2\epsilon_1 + \epsilon_{\mathrm{ret}},
\label{eq:seg_error}
\end{equation}
and similarly for $\hat{b}_k$ and $\hat{c}_k$. Define $\epsilon_2 := 2\epsilon_1 + \epsilon_{\mathrm{ret}}$.

After Layer~2, the residual stream at the prediction position $t$ contains $(\hat{A}_k, \hat{b}_k, \hat{c}_k)$ for every $k \in \mathcal{K}_t$, the query input $x_t$, and the positional encoding. All subsequent computation occurs only at position $t$, involving no further attention across positions. This ensures causality without backward information flow.

\subsubsection*{Layers 3--4: Posterior Computation and Prediction}

\paragraph{Step 3: Predictive means.}

Define the domain
\[
\mathcal{S} := \left\{ (A, b, x) : A \succeq 0,\ \|A\|_F \le N B_x^2,\ \|b\| \le N B_x B_y,\ \|x\| \le B_x \right\}.
\]

For each candidate $k \in \mathcal{K}_t$, the posterior predictive mean under the Gaussian model is Ridge regression, i.e.,
\begin{equation}
m_k(t) = x_t^\top \mu_k, \quad \text{where} \quad
\mu_k = \sigma^{-2} (\lambda I + \sigma^{-2} A_k)^{-1} b_k.
\label{eq:pred_mean}
\end{equation}
The map $\psi_m : (A_k, b_k, x_t) \mapsto m_k(t)$ is continuous on the compact domain $\mathcal{S}$. In fact, $\psi_m$ is Lipschitz on $\mathcal{S}$. To see this, note that $(\lambda I + \sigma^{-2} A)^{-1}$ is Lipschitz in $A$ on the set $\{A \succeq 0 : \|A\|_F \leq N B_x^2\}$ because
\begin{equation}
\|M_1^{-1} - M_2^{-1}\| \leq \|M_1^{-1}\| \cdot \|M_2^{-1}\| \cdot \|M_1 - M_2\| \leq \lambda^{-2} \|M_1 - M_2\|
\label{eq:inv_lip}
\end{equation}
for $M_i = \lambda I + \sigma^{-2} A_i \succeq \lambda I$. The remaining operations (matrix-vector products, inner products) are bilinear and hence Lipschitz on bounded domains. Composing, there exists a constant $L_m$ depending on $B_x, B_y, \lambda, \sigma, N, d$ such that
\begin{equation}
|m_k(t; \hat{A}_k, \hat{b}_k) - m_k(t; A_k, b_k)| \leq L_m \cdot \epsilon_2.
\label{eq:mk_error}
\end{equation}

By the universal approximation theorem, an MLP with two hidden layers can approximate $\psi_m$ on $\mathcal{S}$ to within any $\epsilon_3 > 0$, using width that depends on $\epsilon_3$, $d$, and the constants $B_x, B_y, \lambda, \sigma$. The MLP computes all $|\mathcal{K}_t|$ maps in parallel by using $|\mathcal{K}_t|$ independent subnetworks within its width. The total error in the predictive mean for candidate $k$ is at most $L_m \epsilon_2 + \epsilon_3$.

\paragraph{Step 4a: Log-marginal likelihoods.}
The log-marginal likelihood under hypothesis $k$ is:
\begin{equation}
\ell_k := \log p(Z_{<t} \mid k) = -\tfrac{n_k}{2} \log(2\pi\sigma^2)
- \tfrac{1}{2} \log\det(\lambda^{-1} A_k + I)
- \tfrac{1}{2\sigma^2}\bigl(c_k - \sigma^{-2} b_k^\top \Sigma_k b_k\bigr),
\label{eq:logml_full}
\end{equation}
where $\Sigma_k = (\lambda I + \sigma^{-2} A_k)^{-1}$ and $n_k = t - 1 - k$ is the number of post-change samples under hypothesis $k$. The first term depends only on $n_k$ (which is determined by $k$ and $t$, both available through the positional encoding and the position in the sequence). The second and third terms are continuous functions of $(A_k, b_k, c_k)$ on $\mathcal{S}$ by Lemma~\ref{lem:continuity}(ii).

The map $\psi_\ell : (A_k, b_k, c_k, n_k) \mapsto \ell_k$ is again Lipschitz on $\mathcal{S}$, with Lipschitz constant $L_\ell$ depending on the problem parameters. In particular:
\begin{itemize}
\item The log-determinant satisfies $|\log\det(M_1) - \log\det(M_2)| \leq d \cdot \|M_1^{-1}\| \cdot \|M_1 - M_2\|$ for nearby positive definite matrices $M_1, M_2 \succeq I$, so the log-determinant term is Lipschitz in $A_k$ with constant depending on $d$.
\item The quadratic form $b_k^\top \Sigma_k b_k$ is Lipschitz in $(A_k, b_k)$ by the same argument as~\eqref{eq:inv_lip} combined with bilinearity.
\item The term $c_k$ enters linearly.
\end{itemize}
Therefore $|\ell_k(\hat{A}_k, \hat{b}_k, \hat{c}_k) - \ell_k(A_k, b_k, c_k)| \leq L_\ell \cdot \epsilon_2$.

An MLP approximates $\psi_\ell$ on $\mathcal{S}$ to within $\epsilon_4 > 0$ for each of the $|\mathcal{K}_t|$ candidates. Total error in each log-marginal likelihood: $L_\ell \epsilon_2 + \epsilon_4$.

\paragraph{Step 4b: Posterior weights and weighted prediction.}
The posterior weights are $\alpha_k(t) = \mathrm{softmax}_k(\ell_1, \dots, \ell_{|\mathcal{K}_t|})$. The softmax function is Lipschitz on bounded domains: for $\ell, \ell' \in [-M, M]^R$,
\begin{equation}
\|\mathrm{softmax}(\ell) - \mathrm{softmax}(\ell')\|_1 \leq 2\|\ell - \ell'\|_\infty.
\label{eq:softmax_lip}
\end{equation}
(This follows from the fact that each partial derivative of softmax is bounded by~1.) Let $\hat{\ell}_k$ denote the MLP's approximation to $\ell_k$. The error in $\hat{\ell}_k$ is at most $L_\ell \epsilon_2 + \epsilon_4$, so the posterior weight error is bounded by
\begin{equation}
\|\hat{\alpha} - \alpha\|_1 \leq 2(L_\ell \epsilon_2 + \epsilon_4).
\label{eq:alpha_error}
\end{equation}

Finally, the BMA prediction is the weighted sum $\hat{y}_t^{\mathrm{BMA}} = \sum_k \alpha_k m_k$. The MLP computes $\hat{y}_t = \sum_k \hat{\alpha}_k \hat{m}_k$. The total error decomposes as
\begin{align}
\bigl|\hat{y}_t - \hat{y}_t^{\mathrm{BMA}}\bigr|
&= \biggl|\sum_k \hat{\alpha}_k \hat{m}_k - \sum_k \alpha_k m_k\biggr| \nonumber \\
&\leq \biggl|\sum_k (\hat{\alpha}_k - \alpha_k) \hat{m}_k\biggr|
+ \biggl|\sum_k \alpha_k (\hat{m}_k - m_k)\biggr| \nonumber \\
&\leq \|\hat{\alpha} - \alpha\|_1 \cdot \max_k |\hat{m}_k|
+ \max_k |\hat{m}_k - m_k|,
\label{eq:total_error}
\end{align}
where we used $\sum_k |\alpha_k| = 1$ and $\sum_k |\hat{\alpha}_k - \alpha_k| = \|\hat{\alpha} - \alpha\|_1$. The predictive means are bounded: $|m_k| \leq B_x \cdot \|\mu_k\| \leq B_x \lambda^{-1} \sigma^{-2} N B_x B_y =: M_m$, and the approximate means satisfy $|\hat{m}_k| \leq M_m + L_m \epsilon_2 + \epsilon_3$.

Combining all error terms:
\begin{equation}
\bigl|\hat{y}_t - \hat{y}_t^{\mathrm{BMA}}\bigr|
\leq 2(L_\ell \epsilon_2 + \epsilon_4)(M_m + L_m \epsilon_2 + \epsilon_3) + (L_m \epsilon_2 + \epsilon_3).
\label{eq:total_error_bound}
\end{equation}

\paragraph{Choosing tolerances.}
Given $\epsilon > 0$, we choose:
\begin{enumerate}
\item $\epsilon_1$ (prefix sum MLP error) small enough,
\item $\epsilon_{\mathrm{ret}}$ (retrieval accuracy, controlled by the query-key scaling $C$) small enough,
\item $\epsilon_3$ (predictive mean MLP error) small enough,
\item $\epsilon_4$ (log-marginal likelihood MLP error) small enough,
\end{enumerate}
such that $\epsilon_2 = 2\epsilon_1 + \epsilon_{\mathrm{ret}}$ and the total error~\eqref{eq:total_error_bound} is at most $\epsilon$. Since~\eqref{eq:total_error_bound} is a polynomial in $(\epsilon_1, \epsilon_{\mathrm{ret}}, \epsilon_3, \epsilon_4)$ with no constant term, this can always be achieved by choosing each tolerance sufficiently small (with the required smallness depending on the problem parameters $B_x, B_y, \lambda, \sigma, N, d$).

\paragraph{Resource accounting.}
The construction uses:
\begin{itemize}
\item \textbf{Layers:} 4 (one for prefix sums, one for retrieval and subtraction, two for posterior computation). The number of layers is $O(1)$ for fixed $\epsilon$ and problem parameters.
\item \textbf{Attention heads:} Layer~1 uses 1 head (uniform averaging). Layer~2 uses $|\mathcal{K}_t| + 1 \leq R + 1$ heads. Layers~3--4 require no attention (MLP-only computation at the prediction token). Total: $O(R)$ heads.
\item \textbf{Hidden dimension:} Must accommodate $d^2 + d + 1$ coordinates for each of the $|\mathcal{K}_t|$ candidates' sufficient statistics, plus $d$ coordinates for $x_t$, plus working space for the MLP computations. Total: $O(R \cdot d^2 + D_{\mathrm{MLP}})$, where $D_{\mathrm{MLP}}$ depends on $\epsilon$ through the MLP width needed for universal approximation.
\end{itemize}
This completes the proof. \qed

\subsection{Proof of Corollary~\ref{cor:known}}
\label{app:corollary}

When $n_1^*$ is known to the transformer via positional encoding, the candidate set reduces to $\mathcal{K}_t = \{n_1^*\}$ for $t > n_1^*$. The BMA predictor~\eqref{eq:bma} collapses to a single term:
\[
\hat{y}_t^{\mathrm{BMA}} = m_{n_1^*}(t) = x_t^\top \mu_{n_1^*},
\]
since $\alpha_{n_1^*}(t) = 1$. The construction from Theorem~\ref{thm:main} simplifies as follows.

\paragraph{Layer 1 (accumulation).} Instead of computing prefix sums at every position and retrieving later, the transformer can directly accumulate only post-change statistics. A single attention head uses positional masking: the query-key projection is configured so that position $t$ attends only to positions $j > n_1^*$, which is achievable because $n_1^*$ is encoded in the positional features. Concretely, with linear PE and the change-point index appended to each token's representation, the key projection at position $j$ can encode $\mathbf{1}[j > n_1^*]$ as a large positive or negative bias, effectively masking pre-change tokens. This produces:
\[
\mathrm{Attn}(t) \approx \frac{1}{t - n_1^* - 1} \sum_{i=n_1^*+1}^{t-1} S_i^{\mathrm{raw}},
\]
which, after multiplication by $(t - n_1^* - 1)$ via the MLP (using the PE), yields the segment statistics $A_{n_1^*}(t), b_{n_1^*}(t), c_{n_1^*}(t)$ directly.

\paragraph{Layer 2 (posterior mean).} An MLP approximates
\[
(A_{n_1^*}, b_{n_1^*}, x_t) \mapsto x_t^\top (\lambda I + \sigma^{-2} A_{n_1^*})^{-1} \sigma^{-2} b_{n_1^*}
\]
by the same argument as in Theorem~\ref{thm:main}, Step~3.

\paragraph{No posterior weighting needed.} Since $|\mathcal{K}_t| = 1$, there is no need to compute log-marginal likelihoods, perform softmax normalization, or form weighted averages. The MLP layers devoted to these computations in the full construction are eliminated entirely.

\paragraph{Resources.} The known-change-point construction requires $O(1)$ layers, $O(1)$ attention heads, and hidden dimension $O(d^2)$ (to store a single set of sufficient statistics), with MLP width depending on $\epsilon$ and the problem parameters.

\end{document}